\newtheorem{lemma}{{Lemma}}
\newtheorem{remark}{{Remark}}
\theoremstyle{definition}
\newtheorem{definition}{{Definition}}
\newcommand{\bbm}{\begin{bmatrix}}
\newcommand{\ebm}{\end{bmatrix}}
\newcommand{\lone}{{\mathcal{L}_1}}
\newcommand{\bff}[2]{{\bf{#1}}_{#2}}
\newcommand{\hbff}[2]{\widehat{\bf{#1}}_{#2}}
\newcommand{\tbff}[2]{\widetilde{\bf{#1}}_{#2}}
\newcommand{\rev}[1]{{#1}}
\setlist*[enumerate,1]{%
  label=(\roman*),
}
\title{\rev{Data-Efficient Multi-Robot, Multi-Task Transfer Learning for Trajectory Tracking}}
\author{Karime Pereida, Mohamed K. Helwa and Angela P. Schoellig
\thanks{Manuscript received: September, 10, 2017; Revised December, 16, 2017; Accepted December, 29, 2017.}
\thanks{This paper was recommended for publication by Editor Paolo Rocco upon evaluation of the Associate Editor and Reviewers' comments. 
This work was supported in part by the Mexican National Council of Science and Technology (abbreviated CONACYT), by OCE/SOSCIP TalentEdge Project \#27901 and by the Ontario Early Researcher Award.} 
\thanks{Karime Pereida, Mohamed K. Helwa and Angela P. Schoellig with the Dynamic Systems Lab (www.dynsyslab.org) at the University of Toronto Institute for Aerospace Studies (UTIAS), Canada
        {\tt\small k.pereidaperez@mail.uto\-ronto.ca} {\tt\small mohamed.helwa@robotics.utias.uto\-ronto.ca} {\tt\small schoellig@utias.uto\-ronto.ca}}%
\thanks{$^{2} $Mohamed K. Helwa is also with the Electrical Power and Machines Department, Cairo University, Egypt.}%
}
\newcommand\copyrighttext{\footnotesize \textbf{Sub version.} Accepted at \textit{2018 IEEE Robotics and Automation Letters.}

\textcopyright 2018 IEEE. Personal use of this material is permitted. Permission from IEEE must be obtained for all other uses, in any current or future media, including reprinting/republishing this material for advertising or promotional purposes, creating new collective works, for resale or redistribution to servers or lists, or reuse of any copyrighted component of this work in other works.}
\newcommand\copyrightnotice{\begin{tikzpicture}[remember picture,overlay]
\node[anchor=south,yshift=10pt] at (current page.south) {\fbox{\parbox{\dimexpr\textwidth-\fboxsep-\fboxrule\relax}{\copyrighttext}}};
\end{tikzpicture}}
\begin{document}
\maketitle

\copyrightnotice{} 

\begin{abstract}
Transfer learning has the potential to reduce the burden of data collection and to decrease the unavoidable risks of the training phase. In this paper, we introduce a multi-robot, multi-task transfer learning framework that allows a system to complete a task by learning from a few demonstrations of another task executed on another system. We focus on the trajectory tracking problem where each trajectory represents a different task, since many robotic tasks can be described as a trajectory tracking problem. The proposed, \emph{multi-robot} transfer learning framework is based on a combined $\lone$ adaptive control and iterative learning control approach. The key idea is that the adaptive controller forces dynamically different systems to behave as a specified reference model. The proposed \emph{multi-task} transfer learning framework uses theoretical control results (e.g., the concept of vector relative degree) to learn a map from desired trajectories to the inputs that make the system track these trajectories with high accuracy. This map is used to calculate the inputs for a new, unseen trajectory. Experimental results using two different quadrotor platforms and six different trajectories show that, on average, the proposed framework reduces the first-iteration tracking error by 74\% when information from tracking a different, single trajectory on a different quadrotor is utilized.
\end{abstract}

\section{INTRODUCTION}

This work aims to design a learning-based control method that guarantees high overall performance of robots carrying out complex tasks in changing environments. These control methods do not require the exact system model, and they are robust against model uncertainties, unknown disturbances, and changing dynamics. This is in contrast to traditional, model-based controllers, where small changes in the conditions may significantly deteriorate the controller performance and cause instability (see \cite{Skelton1989}, \cite{Morari1999} and \cite{Skogestad2007}). 

Training robots to operate in changing environments is complex and time-consuming. Transfer learning reduces the unavoidable risks of the training phase and the time needed to train each robot on each single task. In this paper, we develop a multi-robot, multi-task transfer learning framework that allows a system to complete a task by learning from a few demonstrations of another task executed on another system, see Fig.~\ref{fig:translearn}. We focus on trajectory tracking as many robotic tasks can be formulated as trajectory tracking problems. 
The proposed multi-robot, multi-task transfer framework achieves high-accuracy trajectory tracking from the first iteration for a wide range of robot dynamics and desired trajectories.
\begin{figure}[t]
   \centering
   \includegraphics[width=0.45\textwidth]{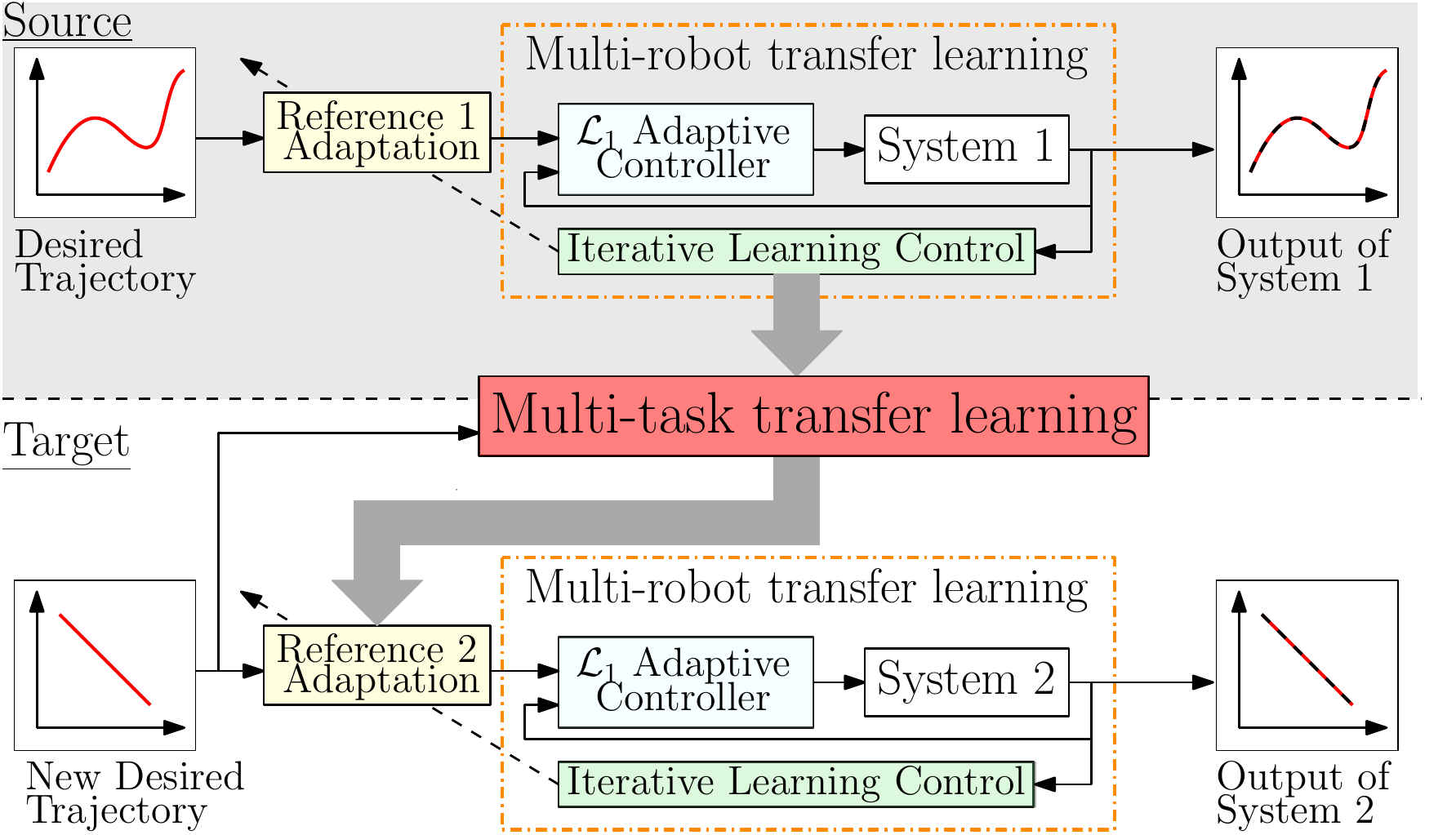}
   \caption{The proposed multi-robot, multi-task architecture. The \emph{multi-robot} transfer learning framework is composed of two methods \emph{(i)} an iterative learning control (ILC) module to learn an input such that the output tracks a desired output signal and \emph{(ii)} an $\lone$ adaptive controller to force different systems to behave in the same repeatable, predefined way. Hence, trajectories learned on a source system can directly be transferred to a target system. The \emph{multi-task} transfer learning framework learns a map from a desired trajectory to the inputs that make the system track it accurately. When a new trajectory is encountered, the learned map is used to calculate the inputs for the new trajectory.}
   \vspace{-14pt}
   \label{fig:translearn}
\end{figure}

In the realm of multi-robot transfer, the work in \cite{Raimalwala2015} learns an optimal static gain between the outputs of two linear, time-invariant (LTI), single-input, single-output (SISO) systems when they try to follow the same reference. The transformation error is minimized when the two systems have stable poles that lie close to each other. In~\cite{Raimalwala2016}, a preliminary study of transfer learning between two nonlinear, unicycle robots is presented. In~\cite{Helwa2017}, it is proved that the optimal transfer learning map between two robots is, in general, a dynamic system. An algorithm is also provided for determining the properties of the optimal dynamic map, including its order, relative degree and the variables it depends on. In \cite{Bocsi2013}, a data transfer mechanism based on manifold alignment of input-output data is proposed. The transferred data improves learning of a model of a robotic arm by using data from a different robotic arm. \rev{In~\cite{Harrison2017}, an algorithm for robust transfer of learned policies to target environments with unmodeled disturbances or unknown parameters is developed. This method improved performance on unmodeled disturbances by 50-300\%, compared to naive transfer. However, this algorithm was not evaluated on physical systems.}

Multi-task transfer is important to consider because learning approaches, such as iterative learning control (ILC), are usually not able to transfer knowledge from previously learned tasks to new, unseen tasks. In \cite{Hamer2013}, using knowledge from previously learned trajectories, a linear map is created to calculate the inputs required to track unseen trajectories. Experimental results show that only one learned trajectory is needed to improve the performance on a new trajectory. However, the variables required for the linear map are selected through experimental trial-and-error, which may be time consuming. \rev{In~\cite{Faust2013}, reinforcement learning is used to learn a value function for a particular load. Once learned, it is used to generate swing-free trajectories for aerial vehicles with suspended loads. Then, sufficient criteria are found to allow transfer of the learned policy to different models, states and action spaces.} In~\cite{Zhou2017} and~\cite{Li2017}, a deep neural network (DNN) is trained to achieve a unity map between the desired and actual outputs. The DNN adapts the reference signal to a feedback control loop to enhance the tracking performance of unseen trajectories. In \cite{Duan2017}, neural networks allow generalization of a task based on a single instance of the given task. However, the architecture of the neural network must be tailored to the specific task. 

In the realm of multi-task and multi-robot transfer, the work in \cite{Devin2017} trains neural network policies that can be decomposed into ``task-specific'' and ``robot-specific'' modules. When a new robot-task combination is encountered, the appropriate robot and task modules are composed to solve the problem. This architecture enables zero-shot generalization with a variety of robots and tasks in simulation. Neural network approaches require a significant amount of data and computational resources to train. In this paper, we emphasize data efficiency to achieve successful transfer in experiments. 

\rev{The goal of this work is to design a learning architecture that is able to achieve high-accuracy tracking in the first iteration 
 \begin{enumerate*}[font=\itshape]
 \item despite the presence of changing dynamics which include switching the robot hardware altogether, and 
 \item by using previously learned trajectories and generalizing knowledge to new, unseen trajectories. 
\end{enumerate*}}
We propose a multi-task transfer framework placed on top of a multi-robot transfer framework (see Fig.~\ref{fig:translearn}).  

The proposed, multi-robot transfer framework is based on the combined $\lone$ adaptive control and ILC approach introduced in \cite{Pereida2017}. \rev{The ILC improves tracking performance over iterations, while the $\lone$ adaptive controller forces dynamically-different nonlinear systems to behave as a specified linear model.} Hence, trajectories learned on a system can be transferred among dynamically different systems (equipped with the same underlying $\lone$ adaptive controller) to achieve high-accuracy tracking. The proposed, multi-task transfer scheme does not require exact knowledge of the system model or significant amounts of data as in~\cite{Devin2017}. Instead, it learns a map between a single desired trajectory and the inputs that make the system track the desired trajectory accurately. \rev{This map is used to calculate the inputs needed to track a new, unseen trajectory with high accuracy.} Unlike~\cite{Hamer2013}, where trial-and-error is used, we derive the structure of this map based on concepts from control theory, the vector relative degree. The proposed approach also allows the system to continue learning over iterations after transfer. 

The remainder of this paper is organized as follows. The problem is defined in Section~\ref{sec:problemstatement}. The details of the proposed approach and the main results are presented in Section~\ref{sec:methodology}. Experimental results on two quadrotors are presented in Section~\ref{sec:results}. Conclusions are provided in Section~\ref{sec:conclusions}.

\section{BACKGROUND}
\label{sec:background}
In this section, we review the definitions needed for the remainder of the paper. Consider a discrete-time, LTI, multi-input, multi-output (MIMO) system: 
\begin{equation}
\label{eq:mimosystem}
\begin{array}{rcl}
x(k+1) & = & A x(k) + B u(k) \\
y(k) & = & C x(k)\,,
\end{array}
\end{equation}
where $x(k)\in \mathbb{R} ^n$ is the system state vector, $u(k) = [u_1(k),\hdots,u_p(k)]^T\in \mathbb{R}^p$ is the system input, $y(k)=[y_1(k),\hdots,y_p(k)]^T\in\mathbb{R}^p$ is the system output, $k$ is the discrete-time index, and $B$ and $C$ are full rank. Let 
\[
 \begin{array}{lcr}
  B = [B_1,\hdots,B_p] & \text{and} & C=[C_1^T,\hdots,C_p^T]^T.
 \end{array}
\]
\begin{definition}[Vector relative degree] 
System~\eqref{eq:mimosystem} is said to have a vector relative degree $(r_1,\hdots,r_p)$ if 
\begin{enumerate}[font=\itshape]
 \item $C_iA^kB_j = 0$, $\forall i = \{1,\hdots,p\}$, $\forall j =\{ 1,\hdots,p\}$ and $\forall k =\{ 0,\hdots,r_{i-2}\}$, and
 \item the decoupling matrix $A_0$ where $[A_0]_{ij} = C_iA^{r_i-1}B_j$, $\forall i,j \in \{1,\hdots,p\}$ is nonsingular.
\end{enumerate}
\label{def:vrelativedegree}
\end{definition}
\begin{remark}
 If the system~\eqref{eq:mimosystem} has a vector relative degree $(r_1,\hdots,r_p)$, then we have 
 \begin{equation}
  \begin{bmatrix}
   y_1(k+r_1)\\
   \vdots \\
   y_p(k+r_p)
  \end{bmatrix} = 
  \begin{bmatrix}
   C_1A^{r_1} \\
   \vdots \\
   C_pA^{r_p}
  \end{bmatrix} x(k)+A_0
  \begin{bmatrix}
   u_1(k) \\
   \vdots \\
   u_p(k)
  \end{bmatrix}\,.
  \label{eq:relativedegree}
 \end{equation}
 \label{rem:relativedegree}
\end{remark}
For SISO systems, the relative degree is the number of sample delays between changing the input and observing a change of the output, and can be easily determined experimentally from the system's step response.
\begin{remark}
 Analogous to the case of SISO systems, the vector relative degree of a MIMO system can be determined from easy-to-execute experiments. In particular, one can carry out $p$ experiments; in each experiment, one should apply a unit step input to only one of the $p$ inputs, and monitor the responses of the $p$ outputs. From Definition~\ref{def:vrelativedegree}, a good estimate of the relative degree $r_i$ is the number of sample delays between changing \emph{any} of the inputs and seeing a change in the output $y_i$. 
 \label{rem:reldegreeexp}
\end{remark}
In other words, for the $p$ easy-to-execute experiments, one can observe the minimum time delay obtained from the different experiments for each output dimension. After estimating the vector relative degree $(r_1,\hdots,r_p)$, one still needs to verify that $A_0$ is invertible, see Definition~\ref{def:vrelativedegree}. Note that from Remark~\ref{rem:relativedegree}, the rank of $A_0$ can be indirectly checked by studying the rank of $Y_{r}$ where $[Y_{r}]_{ij}=y_i(r_i)|_j$
 and $y_i(r_i)|_j$ is the value of the output $y_i$ at time index $r_i$ for the step input experiment $j$, and assuming that for the $p$ experiments, the step inputs are applied at time index $0$ and the system is initiated from the same initial condition $x_0=0$.

We next review the definition of zero dynamics. The \emph{zero dynamics} of~\eqref{eq:mimosystem} are the invariant dynamics under which the system~\eqref{eq:mimosystem} evolves when the output $y$ is constrained to be zero for all times. The zero dynamics represent the internal dynamics of the system when the system output is stabilized to zero. By representing the system~\eqref{eq:mimosystem} in the so-called Byrnes-Isidori normal form, it can be shown that the order of the minimum realization of the zero dynamics is $n-r$, where $r=\sum_{i=1}^{p}r_i$ \cite{Isidori1995}, \cite{Henson1997}. The system~\eqref{eq:mimosystem} is \emph{minimum phase} if its zero dynamics are asymptotically stable in the Lyapunov sense. Based on the definition of zero dynamics and from~\eqref{eq:relativedegree}, it can be shown that if the system~\eqref{eq:mimosystem} is minimum phase, then the input $u(k)=-A_0^{-1}[(C_1A^{r_1})^T,\hdots,(C_pA^{r_p})^T]^T x(k)$ achieves asymptotic stability of the internal dynamics of~\eqref{eq:mimosystem}. We use this fact in Section~\ref{ssec:multitask}. 

Finally, we review the definition of the projection operator, which will be used in Section~\ref{ssec:multirobot}. We define $\nabla$ as the vector differential operator.
\begin{definition}[Projection operator]
 Consider a convex compact set with a smooth boundary given by $\Omega_c:=\{\lambda\in\mathbb{R}^n|f(\lambda)\leq c\}$, $0\leq c\leq1$, where $f:\mathbb{R}^n\rightarrow\mathbb{R}$ is the following smooth convex function:
 \[
 f(\lambda):=\frac{(\epsilon_\lambda +1)\lambda^T\lambda-\lambda_{max}^2}{\epsilon_\lambda \lambda_{max}^2}
 \]
 with $\lambda_{max}$ being the norm bound imposed on the vector $\lambda$, and $\epsilon_\lambda>0$ is a projection tolerance bound of our choice. The projection operator is defined as \cite{Hovakimyan2010}: 
 \[
  \normalfont \text{Proj}(\lambda,y):=
  \begin{cases}
   y & \text{if } f(\lambda)<0\\
   y & \text{if } f(\lambda)\geq 0 \text{ and } \nabla f^Ty\leq 0 \\
   y-f_\lambda & \text{if } f(\lambda)\geq 0 \text{ and } \nabla f^Ty> 0 \\
  \end{cases}
 \]
where $y\in \mathbb{R}^n$, $f_\lambda=\frac{\nabla f \nabla f^T y}{\Vert \nabla f \Vert^2} f(\lambda)$.
\end{definition}

\section{PROBLEM STATEMENT}
\label{sec:problemstatement}
\rev{The objective of this work is to achieve high-accuracy trajectory tracking in the first iteration in a multi-robot, multi-task framework, in which
\begin{enumerate*}[font=\itshape]
 \item there are uncertain, possibly changing environmental conditions, 
 \item the training and testing robots are dynamically different, and 
 \item the training and testing trajectories are different.
\end{enumerate*}}
We consider a control architecture as shown in Fig.~\ref{fig:translearn}. A multi-task framework is designed on top of a multi-robot framework. 

The multi-robot framework combines an adaptive controller with a learning-based controller as in~\cite{Pereida2017}. \rev{The adaptive controller forces two dynamically-different, minimum phase systems, which can be nonlinear systems, to behave in the same predefined way, as specified by a, possibly linear, reference model.} The learning-based controller improves the tracking performance over iterations. \rev{Since dynamically different systems are forced to behave in the same predefined way, trajectories learned on a training system can be directly applied to a different testing system achieving high-accuracy trajectory tracking in the first iteration.}

In the multi-task framework, information of the desired trajectory and the learned input trajectory $(\bff{y}{2}^{*,l},\bff{u}{2}^l)$ for a single, previously learned trajectory $l$ is given, where the subscript $2$ is used to agree with the multi-robot framework notation, see Fig.~\ref{fig:blockdiagram}. We aim to determine a map $\mathcal{M}$ from the desired outputs $\bff{y}{2}^{*,l}$ to the learned inputs $\bff{u}{2}^l$, such that the error between $\mathcal{M}\bff{y}{2}^{*,l}$ and $\bff{u}{2}^l$ is minimized, by solving: 
\begin{equation}
 \min_{\mathcal{M}} \Vert \mathcal{M}\bff{y}{2}^{*,l}-\bff{u}{2}^l \Vert\,.
 \label{eq:minerror}
\end{equation}
We show in this paper that for linearized or LTI systems the map $\mathcal{M}$ is time- and state-invariant, and consequently, when a new desired trajectory $(l+1)$ is encountered, an input trajectory that minimizes the trajectory tracking error can be calculated by: $\bff{u}{2}^{l+1} =\mathcal{M}\bff{y}{2}^{*,l+1}$.
Unlike~\cite{Hamer2013} where the structure of the map $\mathcal{M}$ is obtained through experimental trial-and-error, we use results from control systems theory to provide design guidelines for calculating the map $\mathcal{M}$. 


\section{METHODOLOGY}
\label{sec:methodology}
In this section, we describe in detail the multi-robot and multi-task transfer architectures for MIMO systems. 

\subsection{Multi-robot transfer}
\label{ssec:multirobot}

\rev{The multi-robot transfer framework is based on the idea that an adaptive controller is able to force two dynamically different systems to behave in a predefined way.} In this work we focus on MIMO systems. Therefore, any {\bf{MIMO $\lone$ adaptive controller}} implementation, such as \cite{Jafamejadsani2017}, \cite{Lee2017}, can be used. For convenience and completeness, we present the MIMO $\lone$ adaptive controller that we implemented in our experiments in Section~\ref{sec:results}. The extended $\lone$ adaptive controller assumes that the uncertain and changing dynamics of the robotic system can be described by a MIMO system for output feedback: 
\begin{equation}
 {y}_1(s) = A(s)({u}_{\lone}(s)+{d}_{\lone}(s))\,, \quad {y}_2(s)=\frac{1}{s}{y}_1(s)\,,
 \label{eq:mimooutput}
\end{equation}
where ${y}_1(s)$ and ${y}_2(s)$ are the Laplace transforms of the translational velocity ${y}_1(t)\in \mathbb{R}^p$ and the position ${y}_2(t)\in \mathbb{R}^p$, respectively, $A(s)$ is a transfer function matrix of strictly-proper \emph{unknown} transfer functions that can be stabilized by a proportional-integral controller, ${u}_{\lone}(s)$ is the Laplace transform of the input ${u}_{\lone}(t) \in \mathbb{R}^p$, and ${d}_\lone(s)$ is the Laplace transform of disturbance signals defined as: $d_{\lone}(t):= f(t,{y}_1(t))$, where $f:\mathbb{R}\times\mathbb{R}^p\rightarrow\mathbb{R}^p$ is an \emph{unknown} map subject to the global Lipschitz continuity assumption with Lipschitz constant $L$ (see Assumption 4.1.1 in~\cite{Hovakimyan2010}). 

\begin{figure}[t]
   \centering
    \vspace{7pt}
   \includegraphics[width=0.45\textwidth]{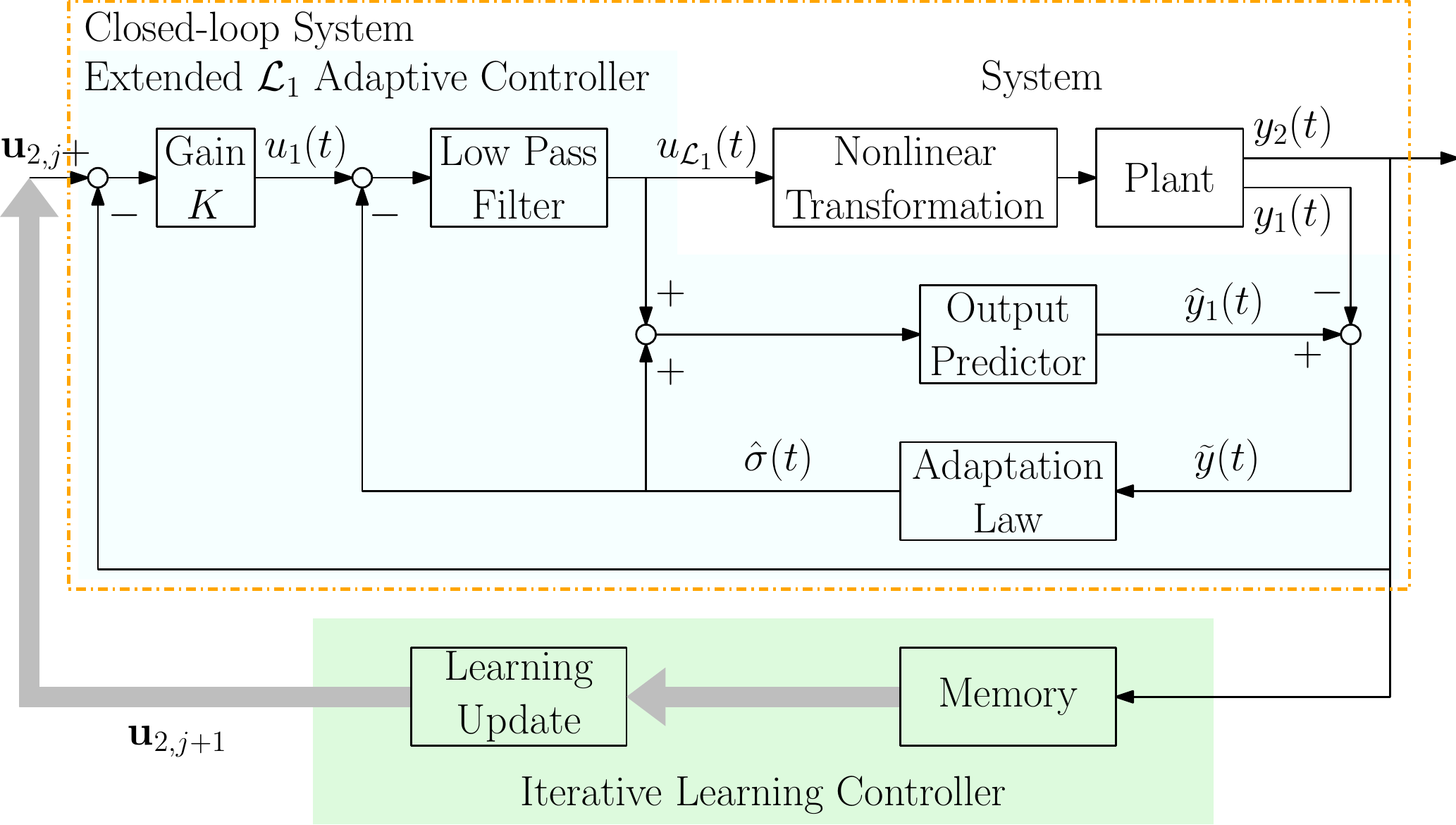}
   \caption{The multi-robot transfer learning architecture combining the extended $\lone$ adaptive control and iterative learning control. Figure adopted from~\cite{Pereida2017}. The thick, light grey arrows represent signals of a whole trajectory execution.}
   \label{fig:blockdiagram}
   \vspace{-14pt}
\end{figure}

The $\lone$ adaptive output feedback controller aims to design a control input ${u}_{\lone}(t)$ such that the output ${y}_2(t)\in\mathbb{R}^p$ tracks a bounded piecewise continuous reference input $u_2(t)\in \mathbb{R}^p$. We aim to achieve a desired closed-loop behavior by nesting the output of the $\lone$ adaptive controller with output ${y}_1(t)\in\mathbb{R}^p$ which tracks ${u}_1(t)\in \mathbb{R}^p$ within a proportional feedback loop (see Fig.~\ref{fig:blockdiagram}). The desired inner loop behavior is given by the following first-order reference dynamic systems: 
\begin{equation}
 M(s) = \text{diag} (M_1(s),\hdots,M_p(s)), \ M_i(s)=\frac{m_i}{s+m_i}\,, 
 \label{eq:refsys}
\end{equation}
where $m_i>0$. We can rewrite system~\eqref{eq:mimooutput} in terms of the reference system~\eqref{eq:refsys} as follows:
\begin{equation}
 {y}_1(s) = M(s)({u}_{\lone}(s)+{\sigma}(s))\,, \quad y_2(s) = \frac{1}{s}y_1(s)\,,
\end{equation}
where the uncertainties in $A(s)$ and ${d}_\lone(s)$ are combined into ${\sigma}(s)$ \rev{, given by
$\sigma (s)=M^{-1}(s) ((A(s)-M(s))u_\lone(s)+A(s)d_\lone(s))$\,.} The equations that describe the implementation of the extended $\lone$ output feedback controller are: 
\begin{description}
 \item [Output Predictor:] $\widehat{{y}}_1(s) = M(s)({u}_\lone (s) +\widehat{{\sigma}}(s))\,,$ where $\widehat{{\sigma}}(s)$ is the adaptive estimate of ${\sigma}(s)$.
 \item [Adaptation Law:] $\dot{\widehat{{\sigma}}}(t) = \Gamma \text{Proj}(\widehat{{\sigma}}(t), -\widetilde{y}(t))\,,$
 with $\widehat{{\sigma}}(0) = 0$, where $\widetilde{y}(t):=\widehat{{y}}_1(t)-{y}_1(t)$ and $\text{Proj(. , .)}$ is defined in Section~\ref{sec:background}. The adaptation rate $\Gamma >0$ is subject to the lower bound specified in \cite{Hovakimyan2010} and is set very large for fast adaptation. 
 \item [Control Law:] ${u}_\lone(s) = V(s)({u}_1(s)-\hat{\sigma}(s))\,,$ where $V(s)$ is a diagonal transfer function matrix of low-pass filters: 
 \[
   V(s) = \text{diag}(V_1(s),\hdots,V_p(s))\,, \quad V_i(s) = \frac{\omega_i}{s+\omega_i}\,, 
 \]
where $\omega_i>0$. By filtering out the high frequencies in $\hat{\sigma}(s)$, high adaptation gains can be used without deteriorating the transient behavior of the system. 
\item [Closed-Loop Feedback:] 
\begin{equation} 
 {u}_1(s)=K(u_2(s)-{y}_2(s))\,,
 \label{eq:u1K}
\end{equation} 
where $K= \text{diag}(K_1,\hdots,K_p)$, and $K_i\in\mathbb{R}^+$ is a proportional gain; used to make ${y}_2(t)$ track ${u}_2(t)$.
\end{description}

We design $V(s)$, $K$, and $M(s)$ such that $H(s)=A(s)(I-V(s)+V(s)M^{-1}(s)A(s))^{-1}$ and $F(s)=(sI+H(s)V(s)K)^{-1}$ are stable, and the following $\lone$-norm condition is satisfied $\Vert F(s)H(s)(1-V(s)) \Vert_\lone L < 1$, where $L$ is the global Lipschitz constant of the disturbance function $f$.

The extended $\lone$ adaptive controller makes the system (roughly) behave as a linear, MIMO system described by: 
\begin{equation}
\begin{array}{c}
 {y}_2(s) = \text{diag}(D_1(s),\hdots,D_p(s)){u}_2(s)\,,\,\, \text{where} \\
D_i(s) = \frac{K_im_i}{s^2+m_is+K_im_i}\,.
\end{array}
\label{eq:clrefsys}
\end{equation}

\rev{The multi-task transfer framework, discussed in the next subsection, requires a desired trajectory and the correct input that makes the system track the desired trajectory. To construct this pair of desired trajectory and corresponding correct input, we use an {\bf{optimization-based ILC}}~\cite{Schoellig2009} to modify the input and improve the tracking performance of the system, which now behaves close to~\eqref{eq:clrefsys}, in a small number of iterations $1,\hdots,j$.} The objective is to make ${y}_2(t)$ track a desired output trajectory ${{y}}_2^*(t)$, which is defined over a finite-time interval. We assume that there exist input, state and output trajectories $({{u}}_2^*(t), {{x}}^*(t), {{y}}_2^*(t))$ that are feasible with respect to the true dynamics of the $\lone$-controlled system under linear constraints on the system inputs and/or outputs. We also assume that the system stays close to the desired trajectory; hence, we only consider small deviations from the above trajectories $(\widetilde{{u}}_2(t), \widetilde{{x}}(t), \widetilde{{y}}_2(t))$. The small deviation signals are discretized since the input of computer-controlled systems is sampled and measurements are only available at fixed time intervals. \rev{We use Equation (12) in~\cite{Schoellig2012} to write the output and input trajectories in the lifted representation as $\tbff{y}{2} = [\widetilde{y}_{2}^{T}(1),\hdots, \widetilde{y}_{2}^{T}(N)]^T$, where $N<\infty$ is the number of discrete samples, $\widetilde{y}_{2}(k) = [\widetilde{y}_{2,1}(k), \hdots, \widetilde{y}_{2,p}(k)]^T\in \mathbb{R}^p$ and $\widetilde{\bf{u}}_2=[\widetilde{u}_2^T(0),\hdots,\widetilde{u}_2^T(N-1)]^T$ where $\widetilde{u}_2(k) = [\widetilde{u}_{2,1}(k), \hdots, \widetilde{u}_{2,p}(k)]^T\in \mathbb{R}^p$. }
 
Now consider the minimum phase, discrete-time, LTI, MIMO system: 
\begin{equation}
\begin{array}{c}
 \widetilde{x}(k+1) = A_\lone\widetilde{x}(k)+B_\lone\widetilde{u}_2(k)\\
 \widetilde{y}_2(k) = C_\lone \widetilde{x}(k)\,,
 \end{array}
 \label{eq:discretemodel}
\end{equation}
where $A_\lone$, $B_\lone$ and $C_\lone$ are the discrete-time matrices that describe~\eqref{eq:clrefsys}. \rev{To capture the dynamics of a complete trial by a static mapping, we compute the lifted representation of~\eqref{eq:discretemodel} using Equations~(13-14) in~\cite{Schoellig2012} to obtain:
\begin{equation}
 \tbff{y}{2,j}=\tbff{F}{\text{ILC}}\tbff{u}{2,j} + \tbff{d}{\infty}\,
 \label{eq:fuplusd}
\end{equation}
where the subscript $j$ represents the iteration number, $\tbff{F}{\text{ILC}}$ is a constant matrix derived from the discretized model~\eqref{eq:discretemodel} and $\tbff{d}{\infty}$ represents a repetitive disturbance that is initially unknown, but is identified during the learning process.} In many control applications, constraints must be placed on the process variables to ensure safe and smooth operations. The system may be subject to linear input or output constraints: 
\begin{equation}
 S_c\tbff{y}{2}\leq \tbff{y}{2,max}\,,\quad Z_c\tbff{u}{2}\leq \tbff{u}{2,max}\,,
 \label{eq:constraints}
\end{equation}
where $S_c$ and $Z_c$ are matrices of appropriate size. 

\rev{Following Equations (19-26) in \cite{Schoellig2012}, we implement an iteration-domain Kalman filter to obtain an estimate of the disturbance $\tbff{d}{\infty}$, $\hbff{d}{j|j}$, based on measurements from iterations $1,\hdots,j$.} Using~\eqref{eq:fuplusd} and the estimated disturbance, the estimated output error $\hbff{y}{2,j+1|j}$ can be represented by: 
\begin{equation}
 \hbff{y}{2,j+1|j} =\tbff{F}{\text{ILC}}\tbff{u}{2,j+1} + \hbff{d}{j|j}\,.
 \label{eq:Kalmanestimation}
\end{equation}
At the end of iteration $j$, an update step computes the next input sequence $\tbff{u}{2,j+1}$ that minimizes the estimated output error $\hbff{y}{2,j+1|j}$ and the control effort $\tbff{u}{2,j+1}$ based on the following cost function: 
\[
 J(\tbff{u}{2,j+1}) = \hbff{y}{2,j+1|j}\bff{Q}{}\hbff{y}{2,j+1|j}+\tbff{u}{2,j+1}\bff{R}{}\tbff{u}{2,j+1}\,,
\]
subject to~\eqref{eq:constraints}, where $\bff{Q}{}$ is a constant, positive semi-definite matrix and $\bff{R}{}$ is a constant, positive definite matrix. The resulting convex optimization problem can be solved efficiently with state-of-the-art optimization libraries. In this work, we use the IBM ILOG CPLEX Optimizer.

\begin{remark}
\label{rem:diffrefsys}
If the source and target systems have underlying $\lone$ adaptive controllers with different reference models, then it is still possible to implement the multi-robot framework by using the reference models to build a map from the source system to the target system \cite{Helwa2017}. Using this map, trajectories learned on the source system can be transferred to the target system, which has a different reference model. 
\end{remark}

\subsection{Multi-task transfer}
\label{ssec:multitask}
\rev{To facilitate transfer learning between different desired trajectories, a mapping is learned from desired trajectories to the inputs that make the system track these trajectories.} Our proposed multi-task transfer learning framework uses insights from control systems theory and knowledge of a single previously learned trajectory to calculate the reference input of a new, unseen trajectory to minimize the trajectory tracking error. We showed that the input-output behavior of the system under the extended $\lone$ adaptive control is LTI, MIMO and minimum phase, see~\eqref{eq:clrefsys}. We use~\eqref{eq:mimosystem} to describe the controlled system and assume it is minimum phase. 

\begin{lemma}
 Consider a minimum phase, discrete-time, MIMO, LTI system~\eqref{eq:mimosystem}, and a smooth desired trajectory $\bff{y}{2}^*$. Then, there exists a control input sequence $\bff{u}{2}$ that achieves perfect tracking of $\bff{y}{2}^{*}$. Moreover, at time instant $k$, the control input $u_2(k)$ can be represented as a linear combination of the state $x(k)$ and the values ${y}_{2,1}^{*}(k+r_1), \hdots, {y}_{2,p}^{*}(k+r_p)$, where $y^*_{2,i}(j)$ is the value of the $i^\text{th}$ component of the desired output at time index $j$, and $(r_1,\hdots,r_p)$ is the vector relative degree of the system.
 \label{lem:linearcomb}
\end{lemma}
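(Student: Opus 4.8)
The plan is to exploit the structure provided by Remark~\ref{rem:relativedegree}, which already expresses the shifted outputs $y_{2,i}(k+r_i)$ in terms of $x(k)$ and $u_2(k)$ through the invertible decoupling matrix $A_0$. First I would set up the "perfect tracking" requirement as the demand that $y_{2,i}(k+r_i) = y_{2,i}^*(k+r_i)$ for every output channel $i$ and every time index $k$. Substituting this into~\eqref{eq:relativedegree} and solving for $u_2(k)$ — which is possible precisely because $A_0$ is nonsingular by Definition~\ref{def:vrelativedegree} — immediately yields the explicit candidate input
\[
  u_2(k) = A_0^{-1}\left(\begin{bmatrix} y_{2,1}^*(k+r_1)\\ \vdots \\ y_{2,p}^*(k+r_p)\end{bmatrix} - \begin{bmatrix} C_1 A^{r_1}\\ \vdots \\ C_p A^{r_p}\end{bmatrix} x(k)\right),
\]
which is manifestly a linear combination of $x(k)$ and the values $y_{2,1}^*(k+r_1),\hdots,y_{2,p}^*(k+r_p)$. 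This establishes the "moreover" part of the statement essentially by construction, once existence is in hand.

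The substance of the proof is therefore the \emph{existence} claim: that applying this input from the initial state actually produces a bounded, well-defined control sequence achieving $\bff{y}{2} = \bff{y}{2}^*$ over the trajectory. The natural route is to pass to the Byrnes--Isidori normal form of~\eqref{eq:mimosystem}, splitting the state into the $r = \sum_i r_i$ "output and its shifts" coordinates $\xi$ and the $n-r$ internal (zero-dynamics) coordinates $\eta$. In these coordinates the $\xi$-subsystem is a chain of delays driven by $u_2$, so choosing $u_2(k)$ as above forces $\xi$ to track the desired output and its shifts exactly; this is the feedback-linearizing choice referenced in Section~\ref{sec:background}. The $\eta$-subsystem is then driven by the (smooth, bounded) desired trajectory, and its free dynamics are exactly the zero dynamics, which are asymptotically stable by the minimum-phase assumption. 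Hence $\eta$ stays bounded, the input sequence is well-defined and bounded, and perfect output tracking holds.

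I expect the main obstacle to be handling the internal dynamics rigorously: one must argue that the BMO-free $\eta$-recursion driven by the desired trajectory remains bounded, which uses asymptotic stability of the zero dynamics together with smoothness (hence boundedness) of $\bff{y}{2}^*$ and its relevant shifts over the finite horizon — and, if one wants tracking from an arbitrary initial condition rather than a matched one, a standard input-to-state-stability argument for the stable zero dynamics. A secondary subtlety is the initial-condition/transient issue: for genuinely "perfect" tracking from sample one, the state $x(0)$ must be consistent with the desired trajectory's initial shifts; otherwise one gets asymptotic rather than exact tracking, and the lemma statement should be read with that caveat (or with the trajectory assumed feasible, as is already assumed for the ILC construction earlier in the paper). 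Modulo this, the argument is a direct application of the normal-form/zero-dynamics machinery already invoked in Section~\ref{sec:background}, and the linear-combination conclusion falls out of~\eqref{eq:relativedegree} with no further work.
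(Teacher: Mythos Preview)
Your proposal is correct and matches the paper's proof almost exactly: both invert~\eqref{eq:relativedegree} via the nonsingular $A_0$ to obtain the same explicit control law, plug it back in to get $y_i(k+r_i)=y_{2,i}^*(k+r_i)$, and then use the minimum-phase assumption to guarantee boundedness of the internal states. The only minor difference is that the paper handles boundedness a bit more directly---observing that the control law is (stabilizing state feedback) $+$ (bounded reference) and invoking ``asymptotic stability $\Rightarrow$ BIBO'' for linear systems---rather than explicitly passing to the Byrnes--Isidori normal form as you outline, but this is a difference in packaging rather than substance.
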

\begin{proof}
 From Definition~\ref{def:vrelativedegree} the matrix $A_0$ is nonsingular. Hence, we can define the following control law: 
 \begin{equation}
  \begin{bmatrix}
   u_{2,1}(k) \\
   \vdots \\
   u_{2,p}(k)
  \end{bmatrix}= A_0^{-1} \left(
  \begin{bmatrix}
   -C_1A^{r_1} \\
   \vdots \\
   -C_pA^{r_p}
  \end{bmatrix} x(k) +
  \begin{bmatrix}
   y_{2,1}^*(k+r_1) \\
   \vdots \\
   y_{2,p}^*(k+r_p)
  \end{bmatrix} 
\right)\,.
\label{eq:uequation}
 \end{equation}
By assumption, the desired trajectory ${y}_{2}^{*}(k)\,,\,k\geq 0$, is known. Hence, at time step $k$, $y_{2,1}^*(k+r_1),\hdots,y_{2,p}^*(k+r_p)$ are all known, and~\eqref{eq:uequation} can be realized. 

Next, by plugging the control law~\eqref{eq:uequation} into~\eqref{eq:relativedegree}, we obtain: 
\begin{equation}
 \begin{bmatrix}
  y_1(k+r_1) \\
  \vdots \\
  y_p(k+r_p)
 \end{bmatrix} =
 \begin{bmatrix}
  y_{2,1}^*(k+r_1) \\
  \vdots \\
  y_{2,p}^*(k+r_p)
 \end{bmatrix}\,.
 \label{eq:equalout}
\end{equation}

Since the system~\eqref{eq:mimosystem} is minimum phase by assumption, then as discussed in Section~\ref{sec:background}, the control input $u_2(k)=-A_0^{-1}[(C_1A^{r_1})^T,\hdots,(C_pA^{r_p})^T]^T x(k)$ achieves asymptotic stability of the internal dynamics of~\eqref{eq:mimosystem}. Since for linear systems asymptotic stability implies bounded-input, bounded-output (BIBO) stability, then it is evident that under the control law~\eqref{eq:uequation} and for bounded reference input $y_2^*$, the internal states of~\eqref{eq:mimosystem} are bounded.

Finally, by applying the control law~\eqref{eq:uequation} at each time step $k$, $k\geq 0$, we have from~\eqref{eq:equalout}, $y_i(\widetilde{k})=y_{2,i}^*(\widetilde{k})$, for each $\widetilde{k}\geq r_i$, i.e., the perfect tracking condition is achieved. From the control law~\eqref{eq:uequation}, the input $u_2(k)$ is a linear combination of $x(k)$ and $y_{2,1}^*(k+r_1), \hdots, y_{2,p}^*(k+r_p)$, which completes the proof.
\end{proof}

\rev{Suppose that we are given a smooth desired trajectory $\bff{y}{2}^*$ and using a learning approach, such as ILC, we are able to obtain for this particular desired trajectory an input sequence vector $\bff{u}{2}$ that achieves high-accuracy tracking performance.} Similar to \cite{Hamer2013}, we use the desired output and learned input trajectories $(\bff{y}{2}^*, \bff{u}{2})$ to learn a transfer learning map $\mathcal{M}$ that minimizes the transfer learning error as in~\eqref{eq:minerror}. Unlike~\cite{Hamer2013}, in which the map is constructed through experimental trial-and-error, we use Lemma~\ref{lem:linearcomb} to construct the map. In particular, we know from Lemma~\ref{lem:linearcomb} that to achieve perfect tracking, $u_{2,i}(k)$, $i=1,\hdots,p$, should be a linear combination of $x(k)$ and $y_{2,1}^*(k+r_1),\hdots,y_{2,p}^*(k+r_p)$, where $(r_1,\hdots,r_p)$ is the vector relative degree of~\eqref{eq:mimosystem}. We assume, for now, that the state $x(k)$ can be measured or
estimated, and stored. Hence, we propose to build, with the available information, the following windowing function: 
\begin{equation}
{W}(\bff{x}{},\bff{y}{2}^*) = 
 \begin{bmatrix}
  x^T(0) & \bar{y}_{2}^*(0) \\
  \vdots & \vdots  \\
  x^T(N_{r}) & \bar{y}_{2}^*(N_{r})  
 \end{bmatrix}\,,
 \label{eq:windowing}
\end{equation}
where $\bar{y}_{2}^*(a) = [y_{2,1}^*(a+r_1),\hdots,y_{2,p}^*(a+r_p)]$, and $N_{r}=N-\max_{j\in \{1,\hdots,p \}} (r_j)$. Using the windowing function ${W}(\bff{x}{},\bff{y}{2}^*)$, we define the following learning process: 
\begin{equation}
 \bff{u}{2,i}={W}(\bff{x}{},\bff{y}{2}^*)\theta_i\,,
 \label{eq:theta}
\end{equation}
where $\bff{u}{2,i}=[u_{2,i}(0),\hdots,u_{2,i}(N_{r})]^T$ is the collection of the $i^{\text{th}}$ elements of $\bff{u}{2}$, obtained from the ILC algorithm. \rev{This is a linear regression problem for the parameter vector $\theta_i\in\mathbb{R}^{n+p}$. }
\begin{remark}
 The vectors of unknowns $\theta_i$, $i\in \{1,\hdots,p\}$, are all functions of the system matrices $A$, $B$, $C$, which is evident from~\eqref{eq:uequation} and the definition of the matrix $A_0$ in Definition~\ref{def:vrelativedegree}. We emphasize that the vectors $\theta_i$ do not depend on the desired trajectory $\bff{y}{2}^*$ or the system states. 
\end{remark}
Therefore, we can reuse the calculated vectors $\theta_i$, which build an invariant map, to calculate for new, unseen, desired trajectories correct input vectors that achieve perfect tracking. In particular, we use the vectors $\theta_i$ to calculate the control input that achieves perfect tracking of a new desired trajectory $\bff{y}{2}^{*,new}$ as follows: 
 \begin{equation}
  {u}_{2,i}^{new}(k)= \begin{bmatrix}
                        x^T(k) & \bar{y}_{2}^{*,new}(k)
                       \end{bmatrix}
\theta_i\, \quad \forall i\in \{1,\hdots,p\}\,,
  \label{eq:unew}
 \end{equation}
where $\theta_i$, $i\in \{1,\hdots,p\}$ are calculated by~\eqref{eq:theta} and $\bar{y}_{2}^{*,new}(k) = [y_{2,1}^{*,new}(k+r_1),\hdots,y_{2,p}^{*,new}(k+r_p)]$. 
\begin{remark}
 Our proposed control law~\eqref{eq:unew} does not assume the knowledge of the system matrices $A$, $B$ and $C$ as in~\eqref{eq:uequation}. It only assumes the knowledge of the vector relative degree of the system, which can be calculated from~\eqref{eq:clrefsys}, or obtained through experiments, see Remark~\ref{rem:reldegreeexp}.
\end{remark}

Notice that the construction of the windowing function $W(\bff{x}{},\bff{y}{2}^*)$ requires the knowledge of the system states or estimated values of the states. We now discuss how the proposed transfer learning approach can be extended to the case where the state measurements or their estimated values are not available. To this end, we review an important Lemma from \cite{Lewis2011}. 

\begin{lemma}
 Consider the discrete-time, LTI system~\eqref{eq:mimosystem}, and suppose that the pair $(A,C)$ is observable. Then, the system state $x(k)$ is given uniquely in terms of input-output sequences as follows: 
 \begin{equation}
  x(k) = M_u 
  \begin{bmatrix}
   u(k-1) \\
   \vdots \\
   u(k-\bar{N})
  \end{bmatrix}+ M_y
  \begin{bmatrix}
   y(k-1)\\
   \vdots \\
   y(k-\bar{N})
  \end{bmatrix}\,,
 \end{equation}
where $\bar{N}$ is an upper bound on the system's observability index, $M_u = U_{\bar{N}}-M_yT_{\bar{N}}$, $M_y = A^{\bar{N}}V_{\bar{N}}^+$, $V_{\bar{N}}^+ = (V_{\bar{N}}^TV_{\bar{N}})^{-1}V_{\bar{N}}^T$ is the left inverse of the observability matrix $V_{\bar{N}}$, and 
\[
 U_{\bar{N}} = \begin{bmatrix} B & AB & \hdots & A^{\bar{N}-1}B \end{bmatrix}\,,
\]
\[
 T_{\bar{N}} = 
 \begin{bmatrix}
  0 & CB & CAB & \hdots & CA^{\bar{N}-2}B \\
  0 & 0 & CB & \hdots & CA^{\bar{N}-3}B \\
  \vdots & \vdots & \ddots  & \ddots & \vdots \\
  0 & \hdots & & 0 & CB \\
  0 & 0 & 0 & 0 & 0 
 \end{bmatrix}\, ,\, 
\]
\[
 V_{\bar{N}} = 
 \begin{bmatrix}
  (CA^{\bar{N}-1})^T & 
  \hdots & 
  (CA)^T &
  C^T
 \end{bmatrix}^T\,.
\]
\label{lem:statereconst}
\end{lemma}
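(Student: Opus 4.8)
The plan is to run the linear recursion backward from time $k$ to time $k-\bar{N}$ and then invert the observability map. First I would iterate the state equation $\bar{N}$ times to get
\begin{equation*}
x(k) = A^{\bar{N}} x(k-\bar{N}) + \sum_{i=1}^{\bar{N}} A^{i-1} B\, u(k-i) = A^{\bar{N}} x(k-\bar{N}) + U_{\bar{N}} \begin{bmatrix} u(k-1) \\ \vdots \\ u(k-\bar{N}) \end{bmatrix},
\end{equation*}
which already isolates the input part $U_{\bar{N}}$ and reduces the problem to expressing the past state $x(k-\bar{N})$ in terms of measured data. This implicitly uses $k \ge \bar{N}$, so that the recursion stays within the horizon on which the signals are defined.

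Next I would collect the past outputs. For $j \in \{1,\dots,\bar{N}\}$ the same recursion gives $x(k-j) = A^{\bar{N}-j} x(k-\bar{N}) + \sum_{i=1}^{\bar{N}-j} A^{i-1} B\, u(k-j-i)$, so $y(k-j) = C x(k-j)$ is an affine function of $x(k-\bar{N})$ and of the inputs $u(k-j-1),\dots,u(k-\bar{N})$. Stacking these $\bar{N}$ equations in the order $y(k-1),\dots,y(k-\bar{N})$ yields
\begin{equation*}
\begin{bmatrix} y(k-1) \\ \vdots \\ y(k-\bar{N}) \end{bmatrix} = V_{\bar{N}}\, x(k-\bar{N}) + T_{\bar{N}} \begin{bmatrix} u(k-1) \\ \vdots \\ u(k-\bar{N}) \end{bmatrix},
\end{equation*}
where $V_{\bar{N}}$ is the observability matrix with block rows $CA^{\bar{N}-1},\dots,CA,C$ and $T_{\bar{N}}$ is the block lower-triangular Markov-parameter matrix written in the statement. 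The only real bookkeeping here is verifying that the input-index shift as $j$ varies reproduces the staircase pattern of $T_{\bar{N}}$ (row $j$ starting with $CB$ in column $j+1$) and that its last block row vanishes, since $y(k-\bar{N}) = C x(k-\bar{N})$ contains no inputs.

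Since $(A,C)$ is observable and $\bar{N}$ is an upper bound on the observability index, $V_{\bar{N}}$ has full column rank $n$; hence $V_{\bar{N}}^T V_{\bar{N}}$ is invertible and $V_{\bar{N}}^+ = (V_{\bar{N}}^T V_{\bar{N}})^{-1} V_{\bar{N}}^T$ is a genuine left inverse, $V_{\bar{N}}^+ V_{\bar{N}} = I_n$. Solving the stacked output equation for $x(k-\bar{N})$ and substituting into the first display gives
\begin{equation*}
x(k) = A^{\bar{N}} V_{\bar{N}}^+ \begin{bmatrix} y(k-1) \\ \vdots \\ y(k-\bar{N}) \end{bmatrix} + \bigl(U_{\bar{N}} - A^{\bar{N}} V_{\bar{N}}^+ T_{\bar{N}}\bigr) \begin{bmatrix} u(k-1) \\ \vdots \\ u(k-\bar{N}) \end{bmatrix},
\end{equation*}
which is exactly the claimed formula with $M_y = A^{\bar{N}} V_{\bar{N}}^+$ and $M_u = U_{\bar{N}} - M_y T_{\bar{N}}$. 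Uniqueness is then immediate: any state consistent with a given length-$\bar{N}$ input/output window has a prescribed image under the injective map $V_{\bar{N}}$, so it is determined. I expect the main obstacle to be notational rather than conceptual — aligning the two stacking conventions so that $V_{\bar{N}}$, $U_{\bar{N}}$, and $T_{\bar{N}}$ emerge precisely as defined (in particular the ``most recent first'' ordering that puts $CA^{\bar{N}-1}$ in the top block row of $V_{\bar{N}}$), together with keeping track of the horizon condition $k \ge \bar{N}$.
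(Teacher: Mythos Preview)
Your argument is correct and is the standard derivation of this state-reconstruction formula. Note, however, that the paper does not actually prove this lemma: it is introduced with ``we review an important Lemma from \cite{Lewis2011}'' and stated without proof, so there is no in-paper proof to compare against. Your bookkeeping of the indices matches the paper's conventions for $V_{\bar{N}}$, $U_{\bar{N}}$, and $T_{\bar{N}}$ exactly, and the use of the left inverse $V_{\bar{N}}^+$ under the observability assumption is the right way to recover $x(k-\bar{N})$; the result then follows as you wrote.
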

\vspace{-12pt}
In other words, if the system~\eqref{eq:mimosystem} is observable, then the state vector can be represented as a linear combination of a finite sequence of past inputs and outputs of the system. From Lemmas~\ref{lem:linearcomb} and~\ref{lem:statereconst}, it can be shown that the control input $u_2(k)$ that achieves perfect tracking of the desired trajectory can be represented as a linear combination of $u_2(k-1),\hdots,u_2(k-\bar{N}),y(k-1),\hdots,y(k-\bar{N}), y_{2,1}^*(k+r_1),\hdots,y_{r,p}^*(k+r_p)$, where $y(k)$ is the actual output of the system at time step $k$, $(r_1,\hdots,r_p)$ is the vector relative degree of~\eqref{eq:mimosystem} and $\bar{N}$ is an upper bound of the observability index of~\eqref{eq:mimosystem}.

Hence, for observable systems, our approach is still valid when only input-output data is available. In this case, the ILC algorithm will also be used to calculate an input $\bff{u}{}$ that will make the output $\bff{y}{}$ track the desired trajectory $\bff{y}{2}^*$ with high accuracy. Using this input-output information, we can redefine the windowing function~\eqref{eq:windowing} by substituting the state $x(k)$ with a finite sequence of input-output information such that: 
\begin{equation}
 W_{IO}(\bff{u}{},\bff{y}{},\bff{y}{2}^*)= 
 \begin{bmatrix}
  \bar{u}(\bar{N}) & \bar{y}(\bar{N}) & \bar{y}_{2}^*(\bar{N}) \\
  \vdots & \vdots & \vdots \\
  \bar{u}(N_r) & \bar{y}(N_r) & \bar{y}_{2}^*(N_r) \\
 \end{bmatrix}\,,
\end{equation}
where $\bar{y}_{2}^*(a) = [y_{2,1}^*(a+r_1),\hdots,y_{2,p}^*(a+r_p)]$, $\bar{u}(a) = [u^T(a-1),\hdots,u^T(a-\bar{N})]$, and $\bar{y}(a) = [y^T(a-1),\hdots,y^T(a-\bar{N})]$. Similar to~\eqref{eq:theta}, we then calculate the vectors of unknown parameters $\theta_{IO,i}$. The vectors are functions of the matrices of system~\eqref{eq:mimosystem} and can be used to calculate the required input when a new, unseen trajectory is encountered. For this case, the proposed control law is: 
\[
 u_{2,i}^{new}(k) = 
 \begin{bmatrix}
  \bar{u}(k) & \bar{y}(k) & \bar{y}_{2}^{*,new}(k)
 \end{bmatrix}
 \theta_{IO,i}\,,\ \forall i \in \{1,\hdots,p\}\,.
\]

In order to allow the system to continue learning after transfer, we need to provide the ILC with an initial estimate of the repetitive disturbance $\hbff{d}{0,trans}$. We assume that the calculated input $\bff{u}{2}^{new}$ achieves perfect tracking of the new trajectory; hence, $\hbff{y}{2,j+1|j}$ in~\eqref{eq:Kalmanestimation} is 0. Using~\eqref{eq:Kalmanestimation} and the calculated input $\bff{u}{2}^{new}$, we are able to calculate the disturbance $\hbff{d}{0,trans}$. 

\rev{In this subsection, we derived the \emph{multi-task} transfer learning framework for linear/linearized systems. This is not a big restriction given that the underlying $\lone$ adaptive controller is able to force a nonlinear system to approximately behave as a linear model.} \rev{In fact, the $\lone$ adaptive controller can be replaced by another adaptive controller with a reference model, e.g., a model reference adaptive controller. However, we implement the $\lone$ adaptive controller as it provides performance bounds on the output and input signals~\cite{Hovakimyan2010}.} Nevertheless, it should be noted that the proposed framework can be also extended to {\bf{nonlinear systems}} with well-defined vector relative degrees and stable inverse dynamics. In particular, analogous to Lemma~\ref{lem:linearcomb}, it can be shown that there exists a control input satisfying perfect tracking of an arbitrary, smooth trajectory ${\bf{y}}_2^*$, and this input is a nonlinear function of the state $x(k)$ and the values $y_{2,1}^*(k+r_1),\cdots,y_{2,p}^*(k+r_p)$, where $(r_1,\hdots,r_p)$ is the system's vector relative degree. In practice, one can use a nonlinear regression model, e.g., a neural network or a Gaussian Process, to approximate this unknown nonlinear function \cite{Zhou2017}, \cite {Li2017}. Alternatively, smooth nonlinear systems can be approximated by piecewise affine/linear systems with arbitrary accuracy \cite{Helwa2015}. One can construct a cover/partition of the state space of a nonlinear system, and represent the nonlinear system in each region of the cover/partition with a local, affine/linear model. One can use the above proposed results to define a linear transfer learning map for each local region, resulting overall in a piecewise linear transfer learning map. The details will be further studied in future publications.

\section{EXPERIMENTAL RESULTS}
\label{sec:results}
This section shows the experimental results of the proposed multi-robot, multi-task framework applied to quadrotors for high-accuracy trajectory tracking. \rev{We assess three aspects to verify the effectiveness of the proposed framework: 
\begin{enumerate*}[font=\itshape]
 \item capability to transfer different trajectories between different quadrotor platforms, 
 \item repeatability of results, and 
 \item use of different reference models for the $\lone$ adaptive controllers of the source and target systems.
\end{enumerate*} }
These experiments significantly extend the experiments in~\cite{Pereida2017}, which only assessed the multi-robot transfer learning framework.

The vehicles used in the experiments are the Parrot AR.Drone 2.0 and the Parrot Bebop 2 (see Fig.~\ref{fig:drones}). Each quadrotor has an underlying $\lone$ adaptive controller that makes both vehicles behave close to a reference system. \rev{In what follows, the signals $u_{1,i}(t)$ are the desired translational velocities, $u_{2,i}(t)$ are the desired translational positions, $y_{1,i}(t)$ are the quadrotor translational velocities and $y_{2,i}(t)$ are the quadrotor translational positions in the $i = {x,y,z}$ directions, respectively.} A central overhead motion capture camera system provides position, roll-pitch-yaw Euler angles and rotational velocity measurements, and through numerical differentiation, \rev{we obtain translational velocities}. \rev{We propose six different trajectories to test our approach, as shown in Fig.~\ref{fig:desiredtrajectories3D}.} To quantify the tracking performance, an average position error along the trajectory is defined by: \rev{
\begin{equation}
 e = \frac{1}{N}\sum_{i=1}^N  \sqrt{e_x^2(i)+e^2_y(i)+e^2_z(i)} \,,
\end{equation} }
where $e_j(i)=u_{2,j}(i)-y_{2,j}(i)$ and $j=x,y,z$. 

\begin{figure}[t]
   \centering
   \vspace{5pt}
   \includegraphics[width=0.35\textwidth]{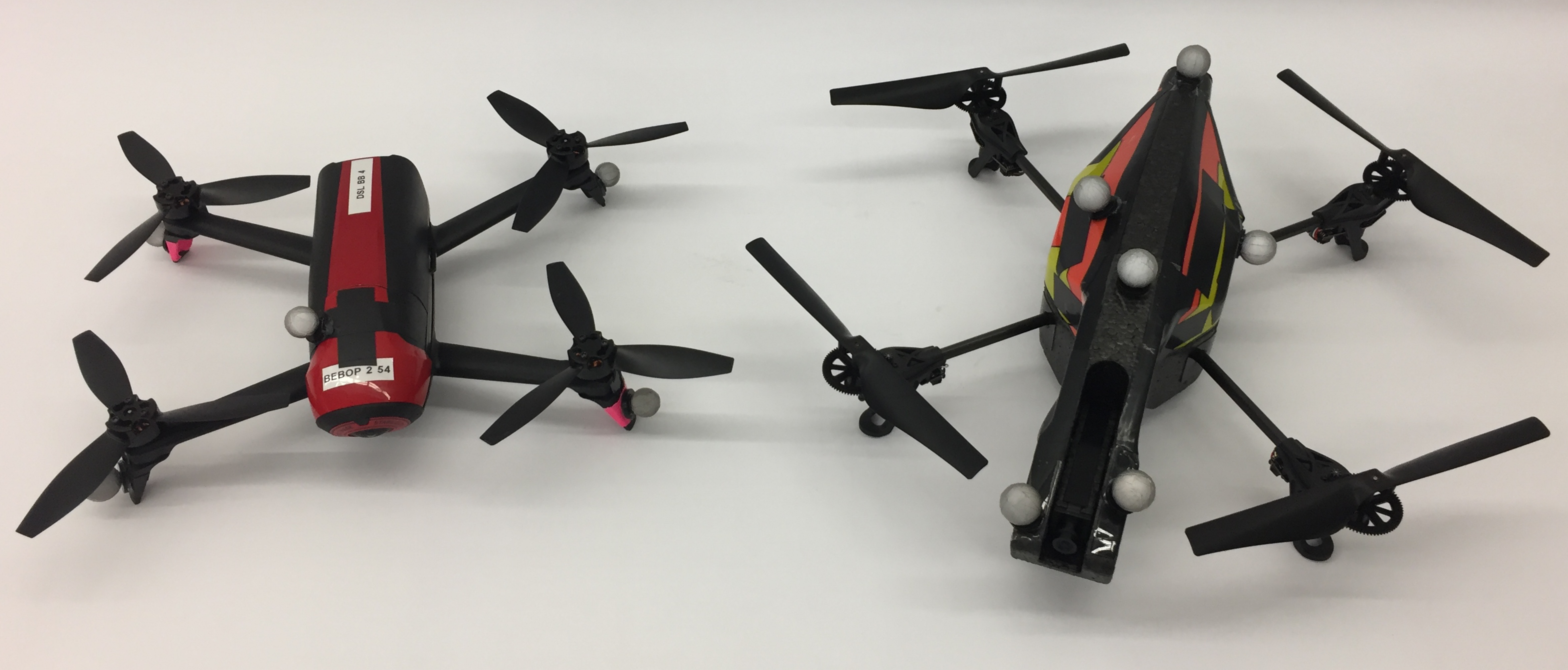}
   \vspace{-7pt}
   \caption{The two different quadrotors used in the experiments. The quadrotor on the left is the Bebop 2, and on the right is the AR.Drone 2.0.}
   \label{fig:drones}
   \vspace{-11pt}
\end{figure}
\begin{figure}[t]
   \centering
   \includegraphics[width=0.33\textwidth]{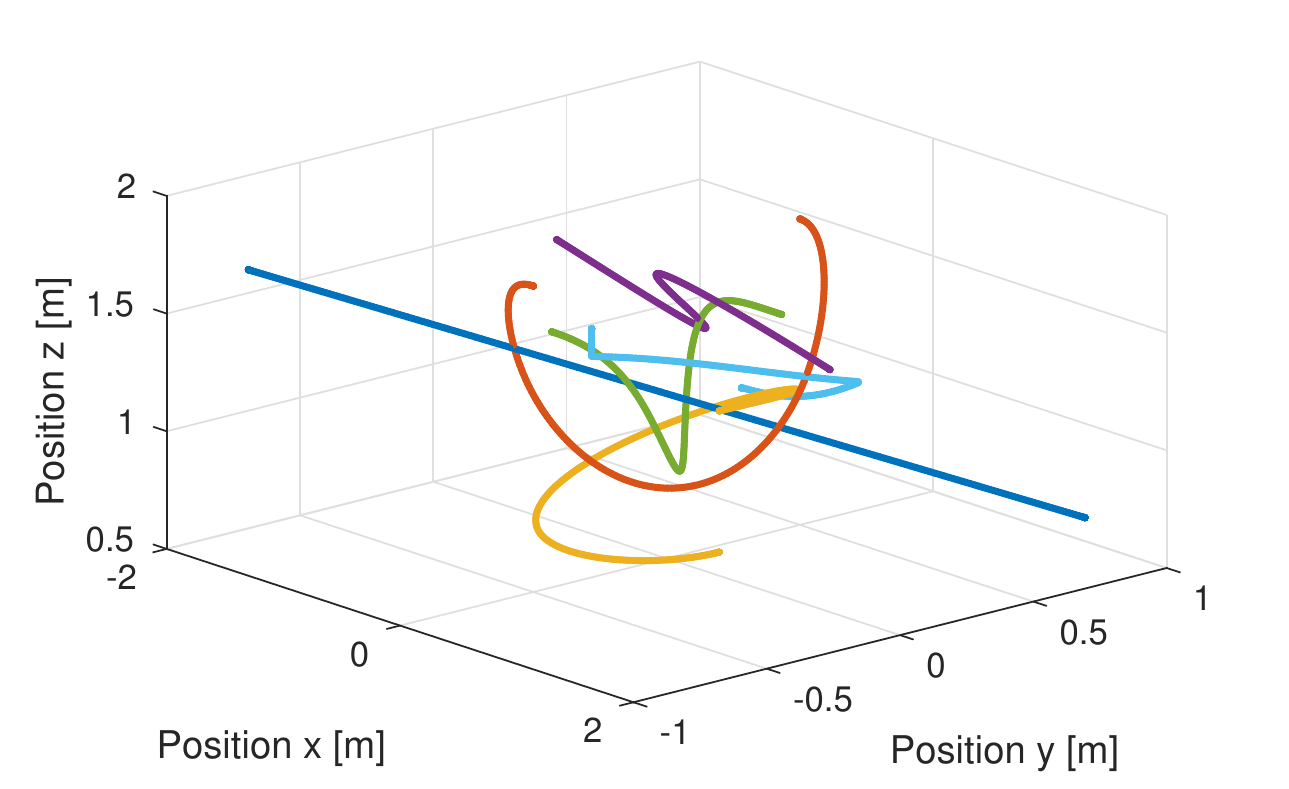}
   \vspace{-8pt}
   \caption{\rev{The six different trajectories that are used to test the multi-robot, multi-task transfer framework. The source and target trajectories used to assess repeatability and different reference models are depicted in dark blue and orange, respectively.} }
   \label{fig:desiredtrajectories3D}
   \vspace{-11pt}
\end{figure}

\subsection{\rev{Multi-robot, multi-task transfer: capabilities}}
\rev{To assess the performance of the proposed multi-robot, multi-task transfer framework under different conditions, we generate six different trajectories (see Fig.~\ref{fig:desiredtrajectories3D}) to be transferred from the AR.Drone~2.0 to the Bebop~2. We learn each of the six trajectories on the AR.Drone~2.0 using the multi-robot framework (Section~\ref{ssec:multirobot}). We devise a \emph{one-to-all} transfer scheme, in which we use one of the six trajectories as a source trajectory to transfer to each of the six trajectories, now called target trajectories (including the source trajectory, which is the the multi-robot transfer learning case). Using the multi-task framework (Section~\ref{ssec:multitask}), we apply the one-to-all scheme six times such that each of the six trajectories is the source trajectory once (6 source $\times$ 6 target trajectories). Fig.~\ref{fig:all_errors1} shows, for each of the six target trajectories, the tracking error on a Bebop~2 during a learning process at iteration 1 (open circles) and at iteration 10 (filled circles) when no transfer information is used. The tracking errors in the first iteration after applying the one-to-all, multi-robot, multi-task transfer are summarized in six boxplots, each of which corresponds to one target trajectory. The red mark on each box indicates the median, while the bottom and top edges of the box indicate the $25^{th}$ and $75^{th}$ percentiles, respectively. The whiskers represent the most extreme data points. Using the proposed framework significantly reduces the error in the first iteration. On average, the percentage of error reduction for the 36 experiments is 74.12\%, when using \emph{only 6 seconds} of training data (length of each trajectory). This is a significant improvement compared to 
 \begin{enumerate*}[font=\itshape]
  \item \cite{Hamer2013} where the structure of the transfer map is determined by trial-and-error, and 
  \item \cite{Devin2017} where the training of neural networks requires a significant amount of data. 
 \end{enumerate*}}

\begin{figure}[t]
   \centering
   \includegraphics[width=0.4\textwidth]{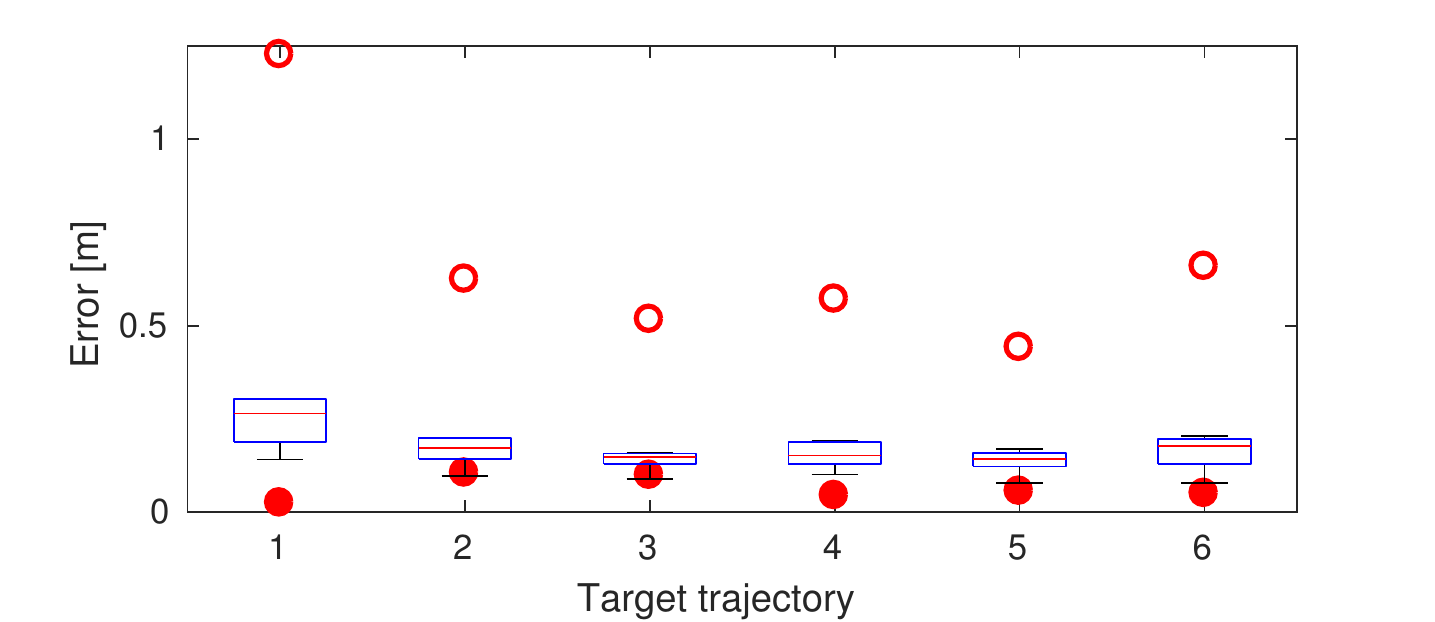}
   \vspace{-9pt}
   \caption{\rev{Tracking errors of six learning processes without transfer information on the Bebop~2 at iteration 1 (open circles) and iteration 10 (filled circles). Boxplots summarize the six one-to-all experiments of the multi-robot, multi-task framework, which significantly decreases the error in the first iteration after transfer. On each box, the red mark indicates the median, while the bottom and top edges indicate the $25^{th}$ and $75^{th}$ percentiles, respectively. The whiskers represent the most extreme data points. }
   }
   \label{fig:all_errors1}
   \vspace{-18pt}
\end{figure}

\subsection{\rev{Multi-robot, multi-task transfer: repeatability}}
\rev{We also test the repeatability of the above results. We choose a single pair of source and target trajectories (in Fig.~\ref{fig:desiredtrajectories3D}, dark blue and orange trajectory, respectively). We repeat ten times a 10-iteration learning process on the Bebop~2 with underlying $\lone$ adaptive controller and ILC when 
\begin{enumerate*}[font=\itshape]
 \item no transfer is used, and 
 \item transfer
\end{enumerate*}
information from the source trajectory learned on the AR.Drone~2.0 is used to initialize the learning process. Fig.~\ref{fig:repeaterror} shows the mean error over the ten repetitions for each of the 10 iterations. The proposed framework significantly decreases the tracking error in the first iteration. The standard deviation is very low, with a mean of 0.0013 m and 0.0016 m for the no transfer and transfer cases, respectively. This is achieved as the $\lone$ adaptive controller forces the Bebop~2 to behave like the reference model despite the presence of disturbances.}

\subsection{\rev{Multi-robot, multi-task transfer: different reference models for the $\lone$ adaptive controllers}}
\rev{For completeness, we include experimental results for Remark~\ref{rem:diffrefsys}, when the source and target systems have $\lone$ adaptive controllers with different reference models. In particular, we modified the reference model of the Bebop 2, and used a mapping between the reference models in addition to the proposed multi-robot, multi-task transfer framework. The tracking error of a 10-iteration ILC process using transfer information is shown in magenta in Fig.~\ref{fig:repeaterror}. The proposed framework reduces the tracking error in the first iteration after transfer by 74.86\%, even when the $\lone$ adaptive controller of the target system has a different reference model.}

\begin{figure}[t]
   \centering
   \includegraphics[width=0.4\textwidth]{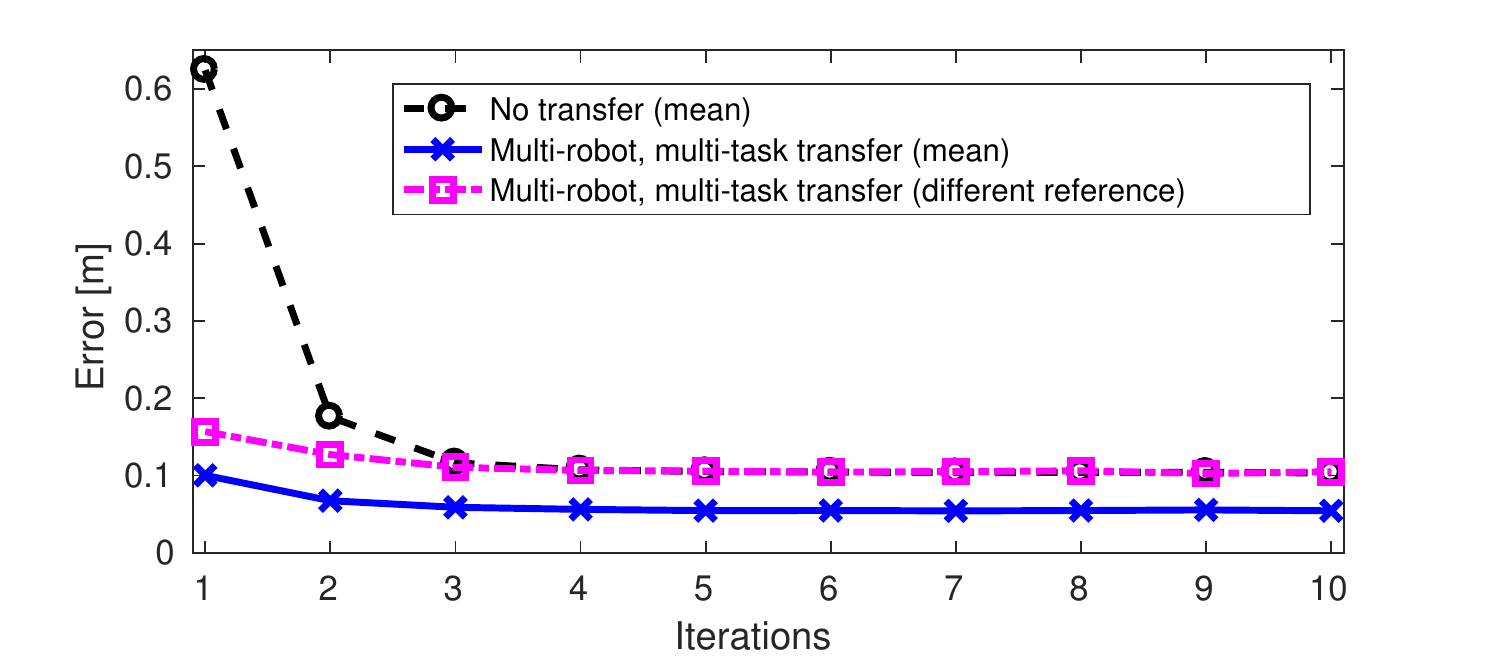}
    \vspace{-9pt}
   \caption{\rev{For a single source and target trajectory pair, the average position errors over ten 10-iteration ILC processes when the Bebop 2 is tracking the target trajectory are shown  
   \emph{(i)} in blue, when the proposed multi-robot (AR.Drone~2.0 to Bebop~2), multi-task (source to target trajectory) framework is used, and
   \emph{(ii)} in black, when no transfer information is used. The errors after the proposed multi-robot, multi-task transfer when the source and target systems have $\lone$ adaptive controllers with different reference models are shown in magenta. 
   }}
   \label{fig:repeaterror}
   \vspace{-14pt}
\end{figure}

\section{CONCLUSIONS}
\label{sec:conclusions}
In this paper, we introduced a multi-robot, multi-task transfer learning framework for MIMO systems. We focused on the trajectory tracking problem. The multi-robot transfer learning framework is based on a combined $\lone$ adaptive controller and ILC. The $\lone$ adaptive controller forces systems to behave like a specified linear reference model, allowing learned tasks to be directly transferred to other systems. The multi-task transfer learning framework uses control theory results to build a time- and state-invariant map from the desired trajectory to the input that accurately tracks this trajectory. This map can be used to generate inputs for new desired trajectories. \rev{Experimental results on two different quadrotors and six different trajectories show that the proposed framework reduces the first-iteration tracking error by 74\% on average, when information from tracking a different, single trajectory on a different quadrotor is utilized.}



%
%
%

\bibliographystyle{IEEEtran}
\bibliography{IEEEabrv,rootarxiv}

\end{document}